\newtheorem{claim}{Claim}
\newtheorem{theorem}{Theorem}
\title{A Genetic Algorithm for Multi-Capacity Fixed-Charge Flow Network Design
\thanks{This research was funded by the U.S. Department of Energy's Fossil Energy Office through the Carbon Utilization and Storage Partnership (CUSP) for the Western USA (Award No. DE-FE0031837) as well as by the U.S. National Science Foundation through the Research Experience for Undergraduates program (Award No. 2243010)} 
}
\author{
  Caleb Eardley, Dalton Gomez, Ryan Dupuis \\
  School of Computing \\
  Montana State University \\
   \And
  Michael Papadopoulos \\
  Department of Computer Science \\
  Rensselaer Polytechnic Institute \\
   \And
  Sean Yaw \\
  School of Computing \\
  Montana State University \\
  \texttt{sean.yaw@montana.edu} \\
}
\begin{document}
\maketitle

\begin{abstract}
The Multi-Capacity Fixed-Charge Network Flow (MC-FCNF) problem, a generalization of the Fixed-Charge Network Flow problem, aims to assign capacities to edges in a flow network such that a target amount of flow can be hosted at minimum cost.
The cost model for both problems dictates that the fixed cost of an edge is incurred for any non-zero amount of flow hosted by that edge.
This problem naturally arises in many areas including infrastructure design, transportation, telecommunications, and supply chain management.
The MC-FCNF problem is NP-Hard, so solving large instances using exact techniques is impractical.
This paper presents a genetic algorithm designed to quickly find high-quality flow solutions to the MC-FCNF problem.
The genetic algorithm uses a novel solution representation scheme that eliminates the need to repair invalid flow solutions, which is an issue common to many other genetic algorithms for the MC-FCNF problem.
The genetic algorithm’s efficiency is displayed with an evaluation using real-world \ch{CO2} capture and storage infrastructure design data.
The evaluation results highlight the genetic algorithm's potential for solving large-scale network design problems.
\end{abstract}

\keywords{Fixed-Charge Network Flow \and Genetic Algorithm \and Matheuristic \and Infrastructure Design}

\section{Introduction}\label{Sec:Intro}

The Multi-Capacity Fixed-Charge Network Flow (MC-FCNF) problem is a well-studied optimization problem encountered in many domains including infrastructure design, transportation, telecommunications, and supply chain management~\cite{ghost,antHybrid,whitman_scalable_2022}.
In the MC-FCNF problem, each edge in the network has multiple capacities available to it, with each capacity having its own fixed construction and variable utilization costs.
The objective of the MC-FCNF problem is to assign capacities to edges in the network such that a target flow amount can be hosted at minimal cost.
The MC-FCNF problem is a generalization of the Fixed-Charge Network Flow (FCNF) problem, which has a single capacity (and fixed and variable costs) available per edge.
The MC-FCNF problem is NP-Hard to approximate within the natural logarithm of the number of vertices in the graph~\cite{whitman_scalable_2022}.
As such, finding optimal solutions to large instances is often computationally infeasible.

Significant work has already been done on solving the MC-FCNF and FCNF problems using many techniques including mathematical programming, branch and bound, and optimal approaches~\cite{gendron_matheuristics_2018,Gendron14Branch,nahapetyan_adaptive_2008,eksioglu_solving_2003,diaby_successive_1991}. 
Multi-capacity edge networks are often referred to as buy-at-bulk network design problems, and are often framed as facility location problems, which is similar to the MC-FCNF problem but with added demand constraints on sinks~\cite{friggstad_lp-based_2019,chakrabarty_online_2018,arulselvan_exact_2017}. 
Genetic algorithms have also been introduced for variants of the MCNF problem~\cite{doostie2021novel,binh2014survivable,gaTransportation,onguetou2009solution,genetic,globalGenetic,xin2003physical,kwong2002use}.

In this paper, we introduce a novel genetic algorithm to solve the MC-FCNF problem.
The novel contribution of our genetic algorithm is the representation of a flow solution by an array of parameters that scale the fixed-costs for each edge in the network.
This representation ensures that each array corresponds to a valid flow, thereby eliminating the need for computationally expensive repair functions that are required by other genetic algorithms for the MC-FCNF problem~\cite{doostie2021novel,binh2014survivable,onguetou2009solution,jo_nonlinear_2007,globalGenetic,xin2003physical,kwong2002use}.
By avoiding costly repair functions, the proposed algorithm is able to efficiently find high-quality solutions to very large MC-FCNF problem instances.
The proposed genetic algorithm is inspired by slope scaling techniques previously employed for the FCNF problem~\cite{gendron_matheuristics_2018,crainic_slope_2004}.
It is a \textit{matheuristic}, as it employs mathematical programming to calculate a flow solution from a linear program parameterized with the fixed-cost scaling arrays~\cite{fischetti_matheuristics_2018}. 
Our genetic algorithm is similar to an algorithm proposed by \cite{cosma_efficient_2019}, though ours takes a different two-stage approach to handle multi-capacity edges.
Additionally, we provide more insight into the existence of the optimal solution in the search space.

An evaluation is presented that designs \ch{CO2} capture and storage (CCS) infrastructure deployments using real-world data composed of thousands of vertices and tens of thousands of edges.
In the evaluation, the genetic algorithm is compared to the the solution of an optimal integer linear program formulation of the MC-FCNF problem.
Results from the evaluation demonstrate the utility of the genetic algorithm for very large networks, even if the solution is very small compared to the full network.

The rest of this paper is organized as follows: Section~\ref{Sec:Problem} formally introduces the MC-FCNF program and formulates it as an integer linear program. 
Section~\ref{Sec:lp} presents a linear programming modification to the integer linear program that serves as the core to the genetic algorithm. 
Section~\ref{Sec:Alg} introduces the genetic algorithm and discusses the existence of the optimal solution in the search space.
Section~\ref{Sec:Eval} presents an evaluation of the genetic algorithm on real-world CCS data and the paper is concluded in Section~\ref{Sec:Conc}.

\section{Problem Formulation}
\label{Sec:Problem}

The MC-FCNF problem can be formulated as an integer linear program (ILP), as shown below:

\medbreak
\noindent
Instance Input Parameters:\\
\begin{tabular}{l l}
  \hspace{.5cm}$V$ & Vertex set\\
  \hspace{.5cm}$E$ & Edge set\\
  \hspace{.5cm}$K$ & Set of possible capacities for each edge\\
  \hspace{.5cm}$s \in V$ & Source vertex\\
  \hspace{.5cm}$t \in V$ & Sink vertex\\
  \hspace{.5cm}$c_{k}$ & Capacity of $k$\\
  \hspace{.5cm}$a_{ek}$ & Fixed construction cost of edge $e$ with capacity $k$\\
  \hspace{.5cm}$b_{ek}$ & Variable utilization cost of edge $e$ with capacity $k$\\
  \hspace{.5cm}$T$ & Target flow amount
\end{tabular}

\medbreak
\noindent
Decision Variables:\\
\begin{tabular}{l l}
  \hspace{.5cm}$y_{ek} \in \{0,1\}$ & Use indicator for edge $e$ with capacity $k$\\
  \hspace{.5cm}$f_{ek} \in \mathbb{R}^{\ge 0}$ & Amount of flow on edge $e$ with capacity $k$
\end{tabular}

\medbreak
\noindent
Objective Function:
\begin{equation}
    \label{eq:obj}
    \min \sum_{e \in E} \sum_{k \in K} \big(a_{ek} y_{ek} + b_{ek} f_{ek}\big)
\end{equation}

\medbreak
\noindent
Subject to the following constraints:
\begin{align}
\label{eq:Cap} &f_{ek}\le c_{k} y_{ek},\forall e \in E, k \in K\\
\label{eq:Con} &\sum_{\substack{e\in E:\\src(e)=v}} \sum_{k \in K} f_{ek} = \sum_{\substack{e'\in E:\\dest(e')=v}} \sum_{k \in K} f_{e'k}, \forall v \in V \setminus \{s,t\}\\
\label{eq:Tar} &\sum_{\substack{e\in E:\\src(e)=s}} \sum_{k \in K} f_{ek} = T
\end{align}

\noindent
Where constraint~\ref{eq:Cap} enforces the capacity of each edge and forces $y_{ek}$ to be set to one if $f_{ek}$ is non-zero.
Constraint~\ref{eq:Con} enforces conservation of flow at each internal vertex.
Constraint~\ref{eq:Tar} ensures that the total flow amount meets the target.

Since the MC-FCNF problem is NP-Hard, solving this ILP is intractable for large instances~\cite{whitman_scalable_2022}.
The objective of this paper is to introduce a novel algorithm that efficiently finds high-quality solutions to this ILP without directly solving it.

\section{Non-Integer Linear Program}
\label{Sec:lp}
In this section, we introduce a linear program (LP) that is a modification of the ILP presented in Section~\ref{Sec:Problem}.
Since it is an LP, this new formulation can be solved optimally in polynomial time.
This LP forms the foundation of the genetic algorithm discussed in Section~\ref{Sec:Alg}.
Two components of the ILP change to turn it into an appropriate LP:
\begin{enumerate}
    \item The binary decision variables $y_{ek}$ are removed, thereby turning the model into an LP. Since the $y_{ek}$ variables are removed, the fixed costs are then scaled and combined with the variable costs.
    \item A new scaling parameter, $d_{ek}$, is introduced for each capacity on each edge that will scale the fixed cost of the edge. These parameters form the representation of a flow solution in the genetic algorithm.
\end{enumerate}

Let $g_{ek}$ be the decision variable representing the amount of flow on edge $e$ with capacity $k$ in the LP, analogous to the $f_{ek}$ decision variable in the ILP.
Then, the objective function of the LP is:

\begin{equation}
    \label{eq:LPobj}
    \min \sum_{e \in E} \sum_{k \in K} 
    \big(\frac{a_{ek}}{d_{ek}}+ b_{ek}\big) g_{ek}
\end{equation}

\noindent
The constraints in the LP mirror the constraints in the ILP:
\begin{align}
\label{eq:CapLP} &g_{ek}\le c_k ,\forall e \in E, k \in K\\
\label{eq:ConLP} &\sum_{\substack{e\in E:\\src(e)=v}} \sum_{k \in K} g_{ek} = \sum_{\substack{e'\in E:\\dest(e')=v}} \sum_{k \in K} g_{e'k}, \forall v \in V \setminus \{s,t\}\\
\label{eq:TarLP} &\sum_{\substack{e\in E:\\src(e)=s}} \sum_{k \in K} g_{ek} = T
\end{align}

\noindent
Where constraint~\ref{eq:CapLP} enforces the capacity of each edge.
Constraint~\ref{eq:ConLP} enforces conservation of flow at each internal vertex.
Constraint~\ref{eq:TarLP} ensures that the total flow amount meets the target.

The output of this LP is a flow value for each $g_{ek}$.
Of course, the optimal flow found by the LP is likely not an optimal solution for the ILP.
The true cost of the LP's solution can be determined by calculating its value when input into the ILP's objective function (Equation~\ref{eq:obj}).
This is first done by defining an edge-use indicator function $z_{ek}$ and assigning it values as follows:
\begin{equation}
\label{eq:z}
z_{ek}=
\begin{cases}
    1,& \text{if } g_{ek} > 0\\
    0,& \text{if } g_{ek} = 0
\end{cases}
\end{equation}

This makes the true cost of the LP's solution equal to:
\begin{equation}
\label{eq:score}
\sum_{e \in E} \sum_{k \in K} \big(a_{ek} z_{ek} + b_{ek} g_{ek}\big)
\end{equation}

The genetic algorithm in Section~\ref{Sec:Alg} works by varying the $d_{ek}$ scaling parameters and scoring the resulting optimal $g_{ek}$ values found by the LP with Equation~\ref{eq:score}.

\section{Genetic Algorithm}\label{Sec:Alg}
Genetic algorithms are a common evolutionary heuristic method used for searching and optimization. 
Genetic algorithms manage a population of \textit{organisms} that each correspond to a solution to the problem.
The population evolves over iterations of the algorithm using evolutionary processes observed in nature including selection, crossover, and mutation operations.
Selection is the process of deciding which organisms of the population continue into the next iteration (i.e., next generation).
This is the mechanism that allows the algorithm to prioritize organisms that correspond to better solutions to the problem and control the size of the population.
Crossover is the generation of a new organism from two existing organisms, analogous to biological reproduction.
Similarly, mutation is the slight modification of an organism into a new one corresponding to a different solution.
Crossover and mutation operations are the mechanisms that allow the algorithm to search for new, and possibly better, solutions.

A number of genetic algorithms have been developed to solve various versions of the FCNF problem.
In these algorithms, organisms are broadly represented as either individual edges, or predefined routes through the network.
Representing organisms as individual edges typically involves a binary variable for each edge indicating its availability for use~\cite{doostie2021novel,xin2003physical}.
Alternatively, representing organisms as predefined routes involves a binary variable for each route in a set of predefined routes through the network~\cite{binh2014survivable,onguetou2009solution,kwong2002use}.
In the case of the individual edge representation, generating the initial population, crossover operations, and mutation operations often requires repairing the organism, as random sets of edges are unlikely to result in valid flows.
Using predefined routes simplifies repairing operations, but may still require repair in the event of capacity constraint violations, and is likely to result in sub-optimal solutions due to the limited set of routing options.
The genetic algorithms that use these representations address the issue by employing computationally expensive repair functions to make organisms correspond to valid flows.
Instead of representing an organism in this fashion, we represent it as an array of the fixed-cost scaling parameters $d_{ek}$ introduced in Section~\ref{Sec:lp}.
Then, the solution corresponding to this organism is the set of flow values $g_{ek}$ found by the LP from Section~\ref{Sec:lp}.
The result of this representation is that we can guarantee the solution corresponding to any organism is a valid flow, since the LP enforces that.
This avoids costly repair functions and is the key to the efficiency of our approach.

The motivation for using the fixed-cost scaling parameters as the organism representation is that it allows control over the amount of fixed-costs incurred, while also removing the integer variables from the optimal ILP.
As the scaling parameter decreases to zero, the scaled fixed-cost increases to infinity, thereby dissuading selection of that edge by the LP.
Conversely, as the scaling parameter increases to infinity, the scaled fixed-cost approaches zero, thereby encouraging selection of that edge by the LP.
The genetic algorithm is tasked with searching for an organism of scaling parameters whose corresponding flow solution is as close to optimal for the ILP as possible.
Given that the genetic algorithm is using the fixed-cost scaling parameters as a proxy for a solution instead of using a solution directly, an important question is whether or not there exists a set of scaling parameters that yields an optimal flow solution.
A proof that such a set of scaling parameters is guaranteed to exist is presented in Section~\ref{Sec:Exist}.
The key components and workflow of the genetic algorithm are described below:

{\bf Fitness Function.}
Genetic algorithms use fitness functions to rank and compare the population of organisms to aid the selection process.
Our fitness function first determines the flow solution for a given organism by solving the LP in Section~\ref{Sec:lp} to get the $g_{ek}$ flow values.
After the $g_{ek}$ values are determined, the solution's true cost in the context of the optimal ILP is calculated by Equation~\ref{eq:score}.
The output of Equation~\ref{eq:score} is used as the fitness of the organism, where lower values correspond to higher fitness.

{\bf Selection Function.}
To keep the size of the population computationally manageable, a selection function is employed to prune the population at each iteration. 
A selection function is also used to identify the organisms for crossover operations.
Our genetic algorithm implements a binary tournament selection function in an effort to prioritize high fitness organisms, while not ignoring all low fitness organisms. 
In binary tournament selection, two random organisms are selected, and the one with the higher fitness is kept, while the other is discarded. 
This ensures that high-fitness organisms are likely to remain in the population while maintaining the possibility for low-fitness ones to survive as well.
Binary tournament selection is repeated until the number of organisms in the population is at the desired size.

{\bf Crossover Function.}
A crossover function is used to generate a new child organism from two parent organisms already in the population. 
Our crossover function first randomly selects two parent organisms from the existing population. 
A child organism is constructed as a random interval of the $d_{ek}$ valued array from the first parent organism and the rest of the $d_{ek}$ valued array from the second parent organism.

{\bf Mutation Function.}
In order to mimic evolution and introduce another element of randomness into the search, a mutation function is used to further alter child organisms. 
After a child organism is generated with the crossover function, it may be randomly selected for mutation. 
During a mutation operation, a number of $d_{ek}$ values in the organism are selected and, with equal probability, either incremented up or down a random amount between zero and one. 
Mutated $d_{ek}$ values are not allowed to go below a lower bound to avoid negative values and divide by zero issues.

{\bf Genetic Algorithm.}
Using the functions described above, our algorithm operates as follows:
First, an initial population of organisms is randomly generated.
Each organism in the initial population is initialized as a $d_{ek}$ array filled with a random value between $\epsilon>0$ and the average value of the fixed-costs in the input instance. 
After the initial population of organisms are created, the algorithm proceeds in an iterative fashion.
At each iteration, the fitness of each organism is first calculated as described above. 
While the population size is less than some threshold, the crossover function is executed to generate child organisms.
The child organisms are also subject to randomized mutations from the mutation function.
Once the population has increased in size to the designated threshold, the selection function is run to reduce its size while statistically discarding the lower fitness organisms.
The algorithm keeps executing iterations until the running time reaches a designated time limit.

{\bf CPLEX Polishing.}
Once the time limit has been reached, the most fit organism's $g_{ek}$ flow values are used as a warm start for IBM's CPLEX optimization software. 
CPLEX polishing is run for one fifth of the total time limit resulting in the final flow values returned by the algorithm. 

\subsection{Optimal Search Viability}
\label{Sec:Exist}
The last thing that remains to be shown about the theoretical behavior of the genetic algorithm is that it is possible for it to find the optimal solution to the ILP.
This claim is not trivial, because the search space of the genetic algorithm is the set of possible $d_{ek}$ arrays, which are merely a proxy for flow values, the property we are actually optimizing for.
The motivation for formulating the LP in Section~\ref{Sec:lp} and representing the organisms in the genetic algorithm as $d_{ek}$ arrays follows from the following claim:
\begin{claim}
\label{cl:rationale}
    If each $d_{ek}$ equals the optimal ILP flow value for edge $e$ with capacity $k$, then the optimal flow found by the resulting LP is also an optimal flow for the ILP.
\end{claim}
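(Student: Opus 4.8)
The plan is to use the optimal ILP flow itself as a witness inside the LP's feasible region and then argue the LP's minimizer cannot beat it in true cost. Write $f^{*}_{ek}$ for an optimal ILP flow, $y^{*}_{ek}$ for its use-indicators, and $\mathrm{OPT}$ for the optimal value of Equation~\ref{eq:obj}, and set $d_{ek}=f^{*}_{ek}$ as the hypothesis prescribes. First I would verify that $f^{*}$ is feasible for the LP: capacity constraint~\ref{eq:CapLP} follows from constraint~\ref{eq:Cap}, since $f^{*}_{ek}\le c_k y^{*}_{ek}\le c_k$; and conservation~\ref{eq:ConLP} and target~\ref{eq:TarLP} are the same linear constraints as~\ref{eq:Con} and~\ref{eq:Tar}, which $f^{*}$ already satisfies. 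So $f^{*}$ is a feasible LP flow.

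The key algebraic step is the cancellation in the objective. For every pair with $f^{*}_{ek}>0$, the LP cost contribution of $f^{*}$ is $\left(\frac{a_{ek}}{d_{ek}}+b_{ek}\right)f^{*}_{ek}=\left(\frac{a_{ek}}{f^{*}_{ek}}+b_{ek}\right)f^{*}_{ek}=a_{ek}+b_{ek}f^{*}_{ek}$, exactly the ILP contribution of that pair (with $y^{*}_{ek}=1$). For pairs with $f^{*}_{ek}=0$ the scaled fixed cost is effectively $+\infty$, so no finite-cost LP flow uses them; since $f^{*}$ already furnishes a finite-cost feasible point, the LP optimum avoids them and they contribute nothing on either side. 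Summing over all pairs, the LP objective at $f^{*}$ equals $\mathrm{OPT}$, so the optimal LP value is at most $\mathrm{OPT}$.

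Next I would establish the reverse bound on true cost. Given any LP-feasible flow $g$, defining $z_{ek}$ by Equation~\ref{eq:z} yields a pair $(g,z)$ satisfying every ILP constraint: constraint~\ref{eq:Cap} holds because $g_{ek}>0$ forces $z_{ek}=1$ while $g_{ek}\le c_k$, and the remaining constraints are shared with the LP. Hence $(g,z)$ is ILP-feasible and its true cost (Equation~\ref{eq:score}) is at least $\mathrm{OPT}$. Applying this to the LP-optimal flow $g^{*}$ shows its true cost is at least $\mathrm{OPT}$; the whole claim then reduces to showing this true cost is also at most $\mathrm{OPT}$.

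I expect this last step to be the main obstacle, and it is where I would concentrate effort. The difficulty is that the LP objective is not the true cost: on an edge with $0<g_{ek}<f^{*}_{ek}$ the amortized term $\frac{a_{ek}}{f^{*}_{ek}}g_{ek}$ is strictly below the full fixed charge $a_{ek}$ that the true cost pays, so the LP could in principle prefer a flow $g^{*}$ whose LP value undercuts $\mathrm{OPT}$ while its true cost (charging every used edge in full) exceeds it. To close the gap I would try to show the LP minimizer must reproduce $f^{*}$, or at least a flow of identical true cost: every finite-cost LP flow is supported on $\mathrm{supp}(f^{*})$, and $f^{*}$ minimizes the variable cost $\sum b_{ek}g_{ek}$ over flows on that support (else it would not be ILP-optimal), so one hopes the blended coefficients $\frac{a_{ek}}{f^{*}_{ek}}+b_{ek}$ pin $g^{*}=f^{*}$. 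The delicate point I would scrutinize most is whether the LP's preference, governed by these blended coefficients, can rank two parallel routes carrying different $f^{*}$-levels differently than the ILP's preference, which is governed by the bare $b_{ek}$; if it can, the LP optimum may have strictly smaller LP value yet strictly larger true cost, and this step would require an additional hypothesis (for instance on the per-unit costs) or a more careful argument.
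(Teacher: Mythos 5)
You cannot complete this proof, because Claim~\ref{cl:rationale} is false: the paper does not prove it, it refutes it with the counterexample in Figure~\ref{fig:counter}. The portions of your argument that go through are the two easy bounds. With $d_{ek}=f^{*}_{ek}$, the LP objective evaluated at $f^{*}$ indeed telescopes to the ILP optimum $\mathrm{OPT}$, and any LP-feasible flow, paired with the indicators of Equation~\ref{eq:z}, is ILP-feasible, so the true cost of the LP minimizer is at least $\mathrm{OPT}$. But the step you isolated as the main obstacle --- showing that the LP minimizer's true cost is also at most $\mathrm{OPT}$ --- does not just need ``a more careful argument''; it fails outright, and the failure mechanism is precisely the ranking reversal you described in your final sentence. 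In the paper's instance there are two parallel routes and a target of three; the optimal ILP flow puts two units on $e_1,e_3$ and one unit on $e_2,e_4$, for a cost of $20$. Setting the $d$ values equal to those flows yields the blended LP objective $7g_{e_1}+6g_{e_2}$. Both splits of the three units use all four edges, so the fixed costs are sunk and the ILP's preference between the routes is governed by the bare variable costs, which favor the first route; the LP's preference is governed by the blended coefficients $a_{ek}/f^{*}_{ek}+b_{ek}$, which favor the second. The LP therefore reverses the split, putting two units on $e_2,e_4$ and one on $e_1,e_3$, and its optimal flow has true cost $21>\mathrm{OPT}$.

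Strategically, once you had located this obstacle, the productive move was to try to construct a small instance realizing it (two parallel routes carrying different optimal flow levels suffice), rather than to hope an additional hypothesis could close the gap; as stated, no proof exists. It is also worth noting how the paper responds to the failure of this claim: not by repairing its proof, but by changing the choice of scaling parameters. Theorem~\ref{thm:solution} takes $d_{ek}$ to be a tiny $\epsilon$ off the support of $f^{*}$ and effectively infinite on it --- rather than equal to the flow values --- so that the LP objective collapses to the variable cost over the optimal support. That choice eliminates the blended per-unit coefficients that cause the reversal, and with it the existence of suitable $d_{ek}$ values goes through.
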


The rationale for this claim is that if each $d_{ek}$ and $g_{ek}$ both equal the optimal ILP flow values, then the cost of the LP's objective function from Equation~\ref{eq:LPobj} will equal the optimal cost of the ILP's objective function from Equation~\ref{eq:obj}.
Claim~\ref{cl:rationale} is also the stated motivation behind other similar genetic algorithms for the FCNF problem~\cite{cosma_efficient_2019}.
However, this claim is false as can be seen in Figure~\ref{fig:counter} with the displayed fixed costs ($a_e$), variable costs ($b_e$), capacities ($c_e$), and a capture target of three.
In this instance, the optimal ILP solution is to set the amount of flow on $e_1$ and $e_3$ to two and the amount of flow on $e_2$ and $e_4$ to one for a total cost of $20$.
Setting $d_{e_1}$ and $d_{e_3}$ to two and $d_{e_2}$ and $d_{e_4}$ to one yields the LP objective of minimizing $7g_{e_1} + 6g_{e_2}$.
The optimal solution to this is to set the amount of flow on $e_1$ and $e_3$ to one and the amount of flow on $e_2$ and $e_4$ to two for a total cost of $21$, thereby contradicting Claim~\ref{cl:rationale}.

\begin{figure}
  \centering
  {\includegraphics[width=.45\textwidth]{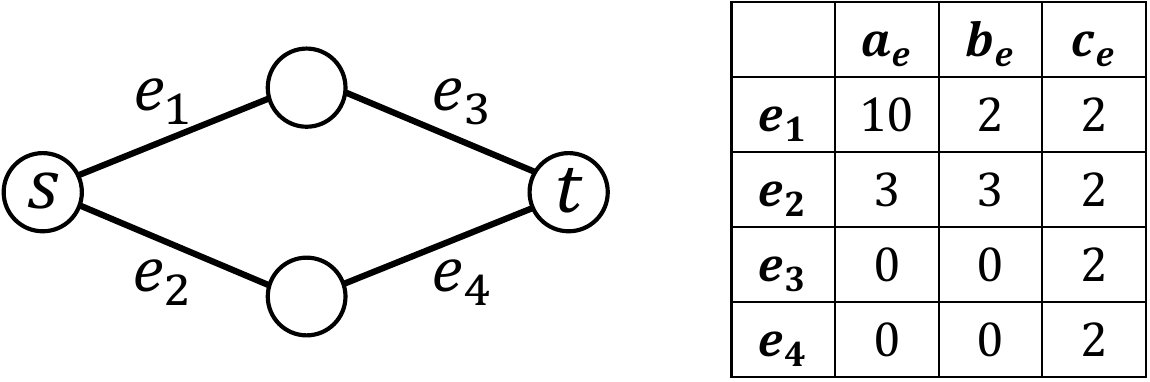}}
  \caption{Counterexample to Claim~\ref{cl:rationale} with the displayed fixed costs ($a_e$), variable costs ($b_e$), capacities ($c_e$), and a capture target of three. In this instance, the optimal ILP solution is $20$ with a flow of two units on $e_1$ and $e_3$ and one unit on $e_2$ and $e_4$. The corresponding optimal LP solution is $21$ with a flow of one unit on $e_1$ and $e_3$ and two units on $e_2$ and $e_4$.}
  \label{fig:counter}
\end{figure}

Since Claim~\ref{cl:rationale} is false, along with the fact that the $d_{ek}$ arrays are only proxies for the flow value solutions we seek, it remains to be shown that there actually exists a set of $d_{ek}$ values that will result in the genetic algorithm finding optimal flow values for the ILP.

\begin{theorem} \label{thm:solution}
For every problem instance, there exists a set of $d_{ek}$ values such that the optimal flow found by the resulting LP is also an optimal flow for the ILP.
\end{theorem}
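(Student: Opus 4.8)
The plan is to construct the required $d_{ek}$ values explicitly from a fixed optimal ILP solution, and to do so in a way that steers the LP toward that solution's support. Let $f^*$ denote an optimal ILP flow, let $S = \{(e,k) : f^*_{ek} > 0\}$ be the set of capacity-edges it uses, and recall that all fixed and variable costs are nonnegative. The guiding intuition, read off from the objective in Equation~\ref{eq:LPobj}, is that a large $d_{ek}$ makes the scaled fixed cost $a_{ek}/d_{ek}$ negligible, so edge $(e,k)$ is priced essentially at its variable cost $b_{ek}$, while a small $d_{ek}$ makes $a_{ek}/d_{ek}$ enormous, so $(e,k)$ is effectively forbidden. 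Accordingly, I would set $d_{ek} = M$ for every $(e,k) \in S$ and $d_{ek} = \delta$ for every $(e,k) \notin S$ with $a_{ek} > 0$, for a large constant $M$ and a small constant $\delta > 0$ fixed later; edges with $a_{ek} = 0$ are priced at $b_{ek}$ regardless of $d_{ek}$ and need no special treatment.

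The first key step is to show that for $\delta$ small enough, no optimal LP flow places any flow on a forbidden edge (one outside $S$ with positive fixed cost). Since an LP optimum is attained at a vertex of the feasible polytope, and this bounded polytope has finitely many vertices, there is a smallest positive coordinate value $\phi_{\min} > 0$ across all vertices. Any vertex using a forbidden edge therefore incurs cost at least $(\min_{\text{forbidden}} a_{ek}/\delta)\,\phi_{\min}$, which exceeds the cost of the feasible reference flow $f^*$ once $\delta$ is small; hence every optimal vertex, and thus the whole optimal face, avoids forbidden edges. The second key step is to show that, restricted to the allowed support, a large $M$ forces every LP optimum to be a minimum-variable-cost flow: on the allowed edges the LP objective equals the total variable cost plus the vanishing perturbation $\frac{1}{M} \sum_{(e,k)\in S} a_{ek} g_{ek}$, so a standard finite-vertex perturbation (lexicographic) argument yields a threshold $M$ beyond which the minimizers of the perturbed objective all lie among the minimizers of the variable cost alone.

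With $d$ chosen this way (pick $M$ large for the second step, then $\delta$ small for the first), the final step is a cost accounting. Let $g^{\mathrm{LP}}$ be any optimal LP flow. It is supported on the allowed edges and minimizes variable cost there, so its variable cost is at most that of $f^*$; its fixed cost $\sum_{(e,k)\in\mathrm{supp}(g^{\mathrm{LP}})} a_{ek}$ is at most that of $f^*$ because its support lies in $S$ together with zero-fixed-cost edges. Setting $z_{ek}$ as in Equation~\ref{eq:z} makes $g^{\mathrm{LP}}$ a feasible ILP solution, so its true cost from Equation~\ref{eq:score} is at least the ILP optimum; combined with the two upper bounds this forces equality, proving $g^{\mathrm{LP}}$ is ILP-optimal.

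I expect the main obstacle to be the passage from the clean limiting intuition ($d_{ek}\to\infty$ on $S$, $d_{ek}\to 0$ off $S$) to a genuine finite choice of $d_{ek}$: one must rule out the LP hiding an arbitrarily small sliver of flow on a forbidden edge, and one must ensure the residual fixed-cost perturbation on $S$ does not tilt the LP away from a variable-cost-optimal flow. Both difficulties are resolved by exploiting the finiteness of the polytope's vertex set, namely the uniform lower bound $\phi_{\min}$ for the first and the lexicographic threshold argument for the second, so the remaining work is mainly to state these two finiteness arguments carefully rather than to carry out any heavy computation.
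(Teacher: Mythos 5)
Your proposal is correct and follows the same core construction as the paper: fix an optimal ILP flow $f^*$ and choose $d_{ek}$ to be effectively infinite on its support and effectively zero (a tiny positive value) off it, so that the LP is steered onto the support of $f^*$ and then essentially minimizes variable cost there. The differences are in rigor, and they favor your version. The paper literally sets $d_{ek}=\infty$ on the support (making the scaled fixed cost exactly zero) and asserts, without a quantitative argument, that a sufficiently small $\epsilon$ off the support makes those edges ``prohibitively expensive''; this assertion needs justification, since an LP solution could in principle route a sliver of flow whose size shrinks with $\epsilon$ across a forbidden edge at bounded cost. Your two finiteness arguments close exactly these gaps: the minimum positive vertex coordinate $\phi_{\min}$ of the (fixed, $d$-independent) feasible polytope gives the uniform lower bound needed to rule out forbidden edges once $\delta$ is small, and the threshold argument handles the residual perturbation $\frac{1}{M}\sum_{(e,k)\in S} a_{ek}\, g_{ek}$ that arises because you (correctly) work with a finite $M$ rather than $\infty$. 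Your final accounting is also cleaner: you bound the fixed and the variable cost of any LP optimum each by the corresponding cost of $f^*$, then invoke ILP feasibility of the LP optimum to force equality, whereas the paper runs two separate contradiction arguments (one to show $z_{ek}^{opt}=1$ on the support, one to equate variable costs), and its fixed-cost contradiction tacitly assumes strictly positive fixed costs --- the degenerate case $a_{ek}=0$ that your proposal handles explicitly.
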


\begin{proof}
\text{Let} $f_{ek}^{opt}$ be the optimal flow values found by the ILP and define the set of $d_{ek}$ values as follows:
\begin{equation}
\label{eq:d}
d_{ek}=
\begin{cases}
    \epsilon > 0,& \text{if } f_{ek}^{opt}=0\\
    \infty,& \text{if } f_{ek}^{opt}>0
\end{cases}
\end{equation}
When $d_{ek}$ is set to an $\epsilon$ value near zero, the scaled fixed cost $\frac{a_{ek}}{d_{ek}}$ makes those edges prohibitively expensive to include in LP solutions, so long as valid solutions exist that do not use those edges (such as the valid solution $f_{ek}^{opt}$).
Likewise, if $d_{ek}$ is set to a very large value, the scaled fixed cost is near zero.
These $d_{ek}$ values effectively restrict the LP to only selecting the edges and capacities with non-zero $f_{ek}^{opt}$ values.

Suppose that $H\subseteq E \times K$ is the set of edge-capacity pairs where $y_{ek}^{opt}$ equals one. 
Then, given the $d_{ek}$ values resulting from Equation~\ref{eq:d}, the objective for the LP becomes:
\begin{equation*}
    \sum_{e \in E} \sum_{k \in K} \left(\frac{a_{ek}}{d_{ek}} + b_{ek} \right) g_{ek} = \sum_{ek\in H} b_{ek} g_{ek}
\end{equation*}

Let $g_{ek}^{opt}$ be optimal flow values to the LP.
We aim to show that $g_{ek}^{opt}$ is also an optimal flow for the ILP.
First, $g_{ek}^{opt}$ is a valid solution to the ILP since $g_{ek}^{opt}$ is a valid flow of the ILP's target value on an identical graph with the same capacities.
Showing that $g_{ek}^{opt}$ is optimal for the ILP can be accomplished by using $g_{ek}^{opt}$ to feed the definitions for $z_{ek}$ in Equation~\ref{eq:z} and showing that:
\begin{equation*}
    \sum_{e \in E} \sum_{k \in K} \big(a_{ek} z_{ek}^{opt} + b_{ek} g_{ek}^{opt}\big) = \sum_{e \in E} \sum_{k \in K} \big(a_{ek} y_{ek}^{opt}+ b_{ek} f_{ek}^{opt}\big)
\end{equation*}

$z_{ek}^{opt}$ must equal one for all edges in $H$.
If $z_{ek}^{opt}$ equals zero for some edge in $H$, then the fixed costs incurred by $g_{ek}^{opt}$ are lower than the fixed costs incurred by $f_{ek}^{opt}$.
Also, since $g_{ek}^{opt}$ is optimal for the LP,
\begin{equation*}
\sum_{ek \in H} b_{ek} g_{ek}^{opt} \le \sum_{ek \in H} b_{ek} f_{ek}^{opt}
\end{equation*}
Thus, if $z_{ek}^{opt}$ equals zero for some edge in $H$, $g_{ek}^{opt}$ is a lower cost flow for the ILP than the optimal $f_{ek}^{opt}$, which is a contradiction.
Therefore, $f_{ek}^{opt}$ and $g_{ek}^{opt}$ must incur identical fixed costs and $z_{ek}^{opt}$ must equal one for all edges in $H$.

Suppose that,
\begin{equation*}
\sum_{ek \in H} b_{ek} g_{ek}^{opt} < \sum_{ek \in H} b_{ek} f_{ek}^{opt}
\end{equation*}

This implies that,
\begin{align*}
\sum_{ek \in H} a_{ek} + \sum_{ek \in H} b_{ek} g_{ek}^{opt} &< \sum_{ek \in H} a_{ek} + \sum_{ek \in H} b_{ek} f_{ek}^{opt}\\
\implies \sum_{ek \in H} \big( a_{ek} + b_{ek} g_{ek}^{opt}\big) &<  \sum_{ek \in H} \big( a_{ek} + b_{ek} f_{ek}^{opt}\big)\\
\implies \sum_{e \in E} \sum_{k \in K} \big( a_{ek} z_{ek}^{opt} + b_{ek} g_{ek}^{opt}\big) &<  \sum_{e \in E} \sum_{k \in K} \big( a_{ek} y_{ek}^{opt} + b_{ek} f_{ek}^{opt}\big)
\end{align*}
which is a contradiction, since $f_{ek}^{opt}$ is an optimal flow value for the ILP, and thus cannot be more expensive than the valid flow $g_{ek}^{opt}$.
Thus, 
\begin{equation*}
\sum_{ek \in H} b_{ek} g_{ek}^{opt} = \sum_{ek \in H} b_{ek} f_{ek}^{opt}
\end{equation*}

which implies that,
\begin{equation*}
\sum_{e \in E} \sum_{k \in K} \big(a_{ek} z_{ek}^{opt} + b_{ek} g_{ek}^{opt}\big) = \sum_{e \in E} \sum_{k \in K} \big(a_{ek} y_{ek}^{opt}+ b_{ek} f_{ek}^{opt}\big)
\end{equation*}
Therefore, defining the $d_{ek}$ values as in Equation~\ref{eq:d} yields an LP whose optimal flow values correspond to optimal flow values of the ILP.
\end{proof}

\section{Evaluation}\label{Sec:Eval}
To demonstrate the efficiency and effectiveness of the genetic algorithm presented in Section~\ref{Sec:Alg}, an evaluation was conducted using \ch{CO2} capture and storage (CCS) infrastructure design data.
CCS is a climate change mitigation strategy that involves capturing \ch{CO2} from industrial sources, notably power generation, transporting the \ch{CO2} in a pipeline network, and injecting it into geological reservoirs for long-term sequestration.
Large-scale CCS adoption will require the optimization of infrastructure for hundreds of sources and sinks and thousands of kilometers of pipelines.
The CCS infrastructure design problem aims to answer the question: 
What sources and sinks should be opened, and where should pipelines be deployed (and at what capacity) to process a defined amount of \ch{CO2} at minimum cost.
CCS sources and sinks both have fixed construction (or retrofit) costs, variable utilization costs, and capacities (or emission limits).
Pipelines have multiple capacities available, depending on the diameter of the pipeline installed.
Pipelines also have fixed construction costs and variable transportation costs that are dependent on the capacity selected.
Though the MC-FCNF problem has only a single source and sink, and does not have costs or capacities associated with nodes, the CCS infrastructure design problem can be translated into the MC-FCNF problem~\cite{olson2024planning,whitman_scalable_2022}.

The genetic algorithm was implemented and integrated into $SimCCS$, the Java-based CCS infrastructure optimization software, which uses CPLEX as its optimization model solver~\cite{middleton2020simccs}.
Initial performance simulations guided the parameterization of the genetic algorithm to have a population size of $10$, and mutation and crossover probability of both $50$\%.
A mutation probability of $50$\% means that each organism has a $50$\% chance of mutation, and a crossover probability of $50$\% means that $50$\% of the population is crossed over with a random other organism in each iteration of the genetic algorithm.
All reported genetic algorithm values are the average of three runs.
The optimal ILP from Section~\ref{Sec:Problem} was implemented in $SimCCS$ using CPLEX as well.
$SimCCS$ was used as a standardized way to represent CCS data and for problem and solution visualization.
Timing was coded directly into $SimCCS$ to ensure only the algorithm of interest was being timed during simulation.
Simulations were run on a machine with Ubuntu 20.04.5, an Intel Xeon W-2255 processor running at 3.7 GHz, and 64 GB of RAM. 
$SimCCS$ on this machine used IBM's CPLEX optimization tool, version 22.1.1.0.

\begin{figure}
    \centering
    \begin{subfigure}[b]{0.5\textwidth}
        \centering
        \includegraphics[width=0.9\textwidth]{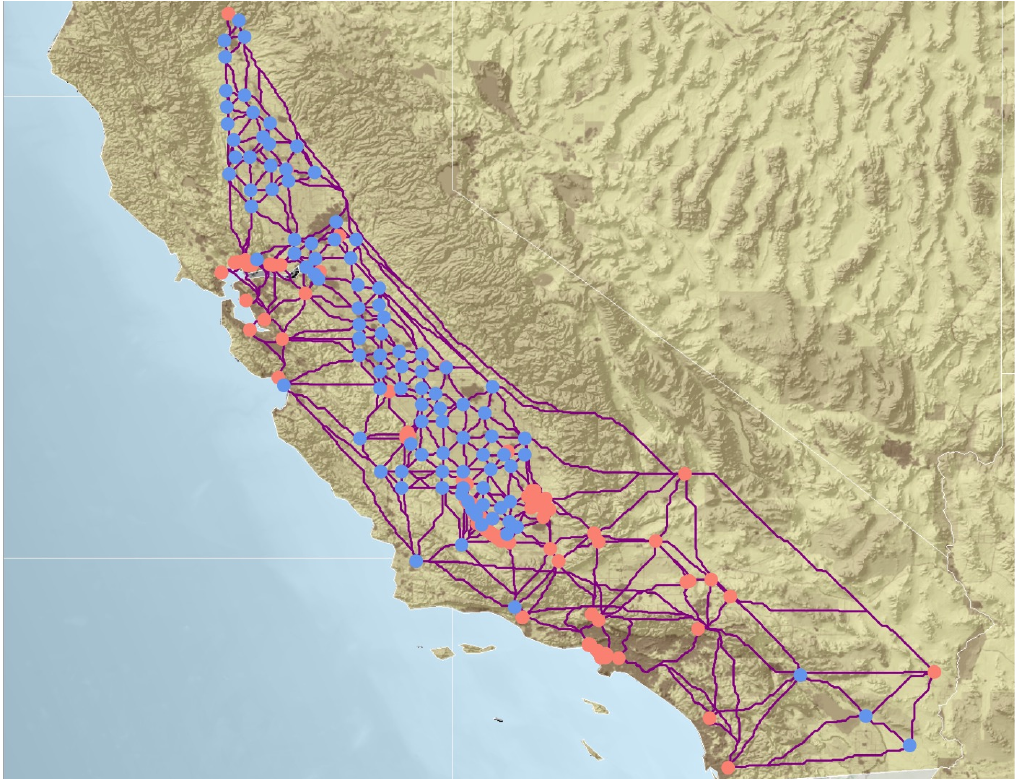}
        \caption{California dataset.}
        \label{fig:CAdata}
    \end{subfigure}%
    ~ 
    \begin{subfigure}[b]{0.5\textwidth}
        \centering
        \includegraphics[width=0.9\textwidth]{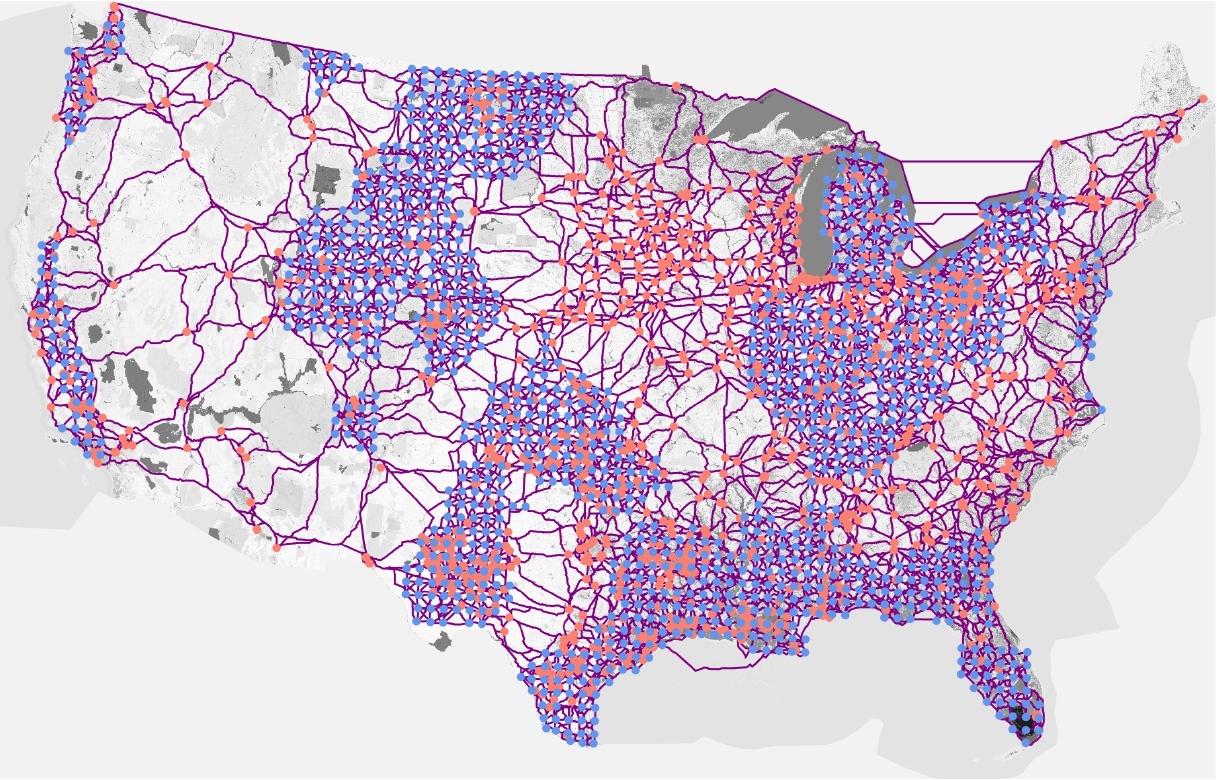}
        \caption{Continental United States dataset.}
        \label{fig:USdata}
    \end{subfigure}
    \caption{CCS datasets consisting of sources (red), sinks (blue), and possible pipeline routes.}
    \label{fig:data}
\end{figure}

The genetic algorithm was tested on two CCS infrastructure design datasets. 
The first dataset covers the United State's state of California and consists of $190$ sources with a total annual emission rate of $88.39$ Mt\ch{CO2}/yr, $102$ sinks with a total lifetime storage capacity of $37.18$ Gt\ch{CO2}, and $1188$ possible pipeline components (i.e., edges in the graph) with a total length of $17940.88$ KM and $11$ possible capacities on each edge. 
This data was collected as part of the US Department of Energy’s (DOE) Carbon Utilization and Storage Partnership project, one of the DOE’s Regional Initiatives to Accelerate CCS Deployment.
A map of this dataset is presented in Figure~\ref{fig:CAdata}.
The second dataset covers the continental United States and consists of $2746$ sources with a total annual emission rate of $532.61$ Mt\ch{CO2}/yr, $1202$ sinks with a total lifetime storage capacity of $2691.86$ Gt\ch{CO2}, and $22597$ possible pipeline components with a total length of $424674.41$ KM and $11$ possible capacities on each edge. 
This data was collected by Carbon Solutions, LLC as part of a study conducted by the Clean Air Task Force~\cite{cs,catf}.
Storage data was generated using the $SCO_2T$ geologic sequestration tool~\cite{ogland2023net}.
A map of this dataset is presented in Figure~\ref{fig:USdata}.
Candidate pipeline routes were generated in $SimCCS$ using its subset spanner network generation algorithm~\cite{yaw2019graph}.
The National Energy Technology Laboratory’s \ch{CO2} Transport Cost Model is used by $SimCCS$ to determine fixed construction and variable utilization costs for the $11$ discrete pipeline capacity options~\cite{netl18cost}.

To assess the efficiency of the genetic algorithm, its solution cost was compared to the solution cost found by CPLEX solving the optimal ILP, with both methods being allowed to run for set running time periods.
For the California dataset, those running time periods were $0.5$, $1$, $2$, $4$, and $8$ hours.
The target flow amount ($T$) was set to $80$ Mt\ch{CO2}/yr for all of the California scenarios.
For the continental United States dataset, the running time periods were $0.5$, $1$, $2$, $4$, $8$ and $16$ hours. 
The target flow amount was set to $500$ Mt\ch{CO2}/yr for the continental United States scenarios.
Figure~\ref{fig:CAtime} presents each algorithm's solution cost over the running time periods for the California dataset, and Figure~\ref{fig:UStime} presents the same results for the continental United States dataset.
The cost of the best solution found by the genetic algorithm in the California dataset was within $0.5$\% of the ILP's solution of across all running times.
Conversely, the cost of the best solution found by the genetic algorithm in the continental United States dataset was $17$\% lower than the ILP's solution after one hour, $7$\% lower after four hours, and $2$\% lower after $16$ hours.
This suggests that the genetic algorithm may have utility for very large problem instances.
The utility for small instances is likely limited, due to the speed of CPLEX.
Further, in applications that require rapid computation of MC-FCNF solutions, the genetic algorithm may exhibit beneficial performance for even smaller instances.

\begin{figure}
    \centering
    \begin{subfigure}[b]{0.5\textwidth}
        \centering
        \includegraphics[width=0.9\textwidth]{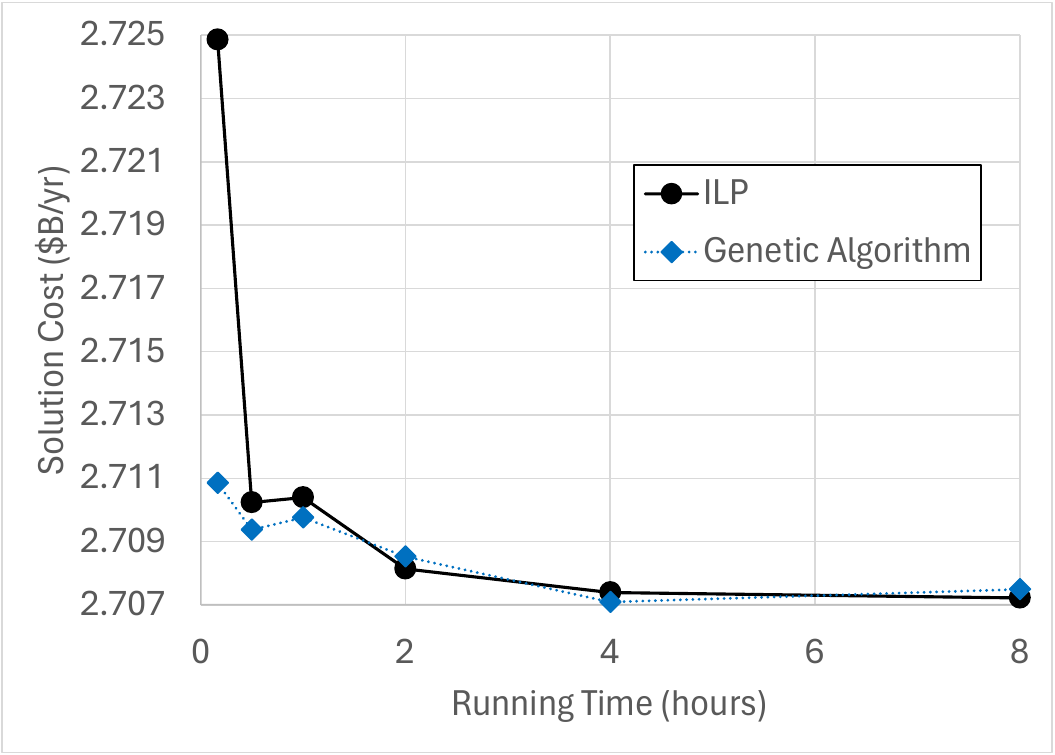}
        \caption{California dataset.}
        \label{fig:CAtime}
    \end{subfigure}%
    ~ 
    \begin{subfigure}[b]{0.5\textwidth}
        \centering
        \includegraphics[width=0.9\textwidth]{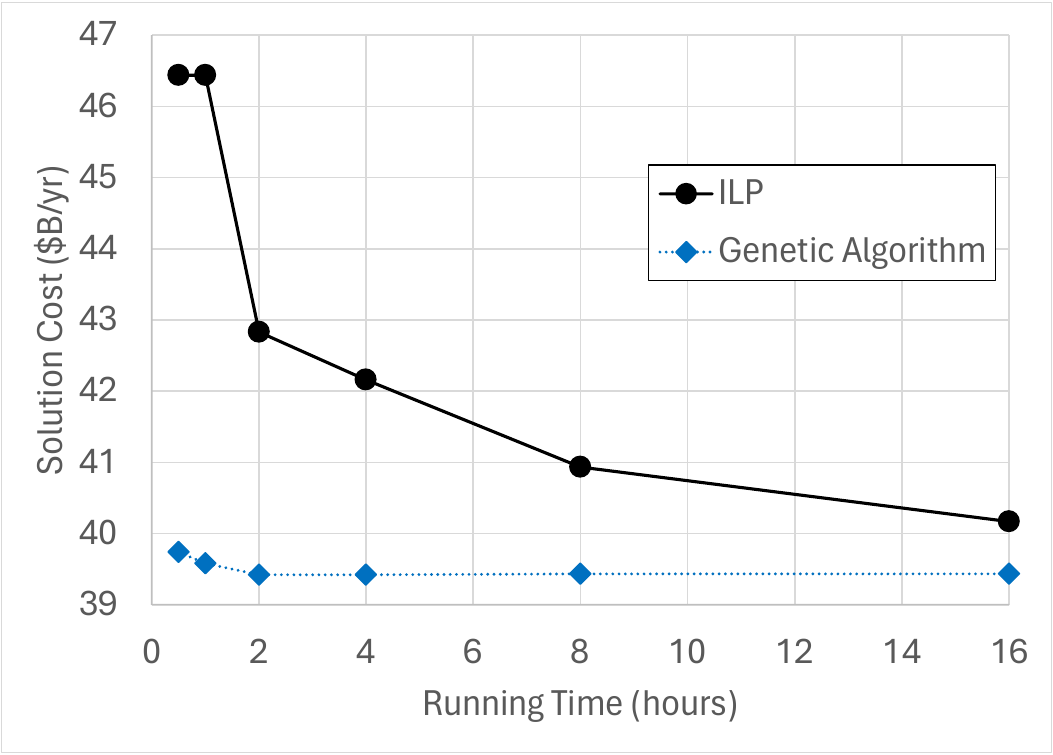}
        \caption{Continental United States dataset.}
        \label{fig:UStime}
    \end{subfigure}
    \caption{Solution cost versus running time for the genetic algorithm and optimal ILP.}
    \label{fig:time}
\end{figure}

Problem solvability is not only related to instance size, but also the amount of target flow being found.
To identify the impact that target flow amount has on solution quality, scenarios were run on the continental United States dataset where the target flow amount was varied from $1$ Mt\ch{CO2}/yr to $532$ Mt\ch{CO2}/yr.
The maximum annual capturable amount of \ch{CO2} for this dataset is $532.61$ Mt\ch{CO2}/yr.
Each algorithm was given two hours to solve each scenario.
Figure~\ref{fig:UStarget} presents each algorithm's solution cost for the various target flow amounts, as well as a table showing the specific solution cost values and the percent improvement of the genetic algorithm's solution over the ILP's solution.
Other than in very low and high target flow amount scenarios, the genetic algorithm was fairly consistent in its improvement over the ILP.
Interestingly, in the low target flow amount scenario, the genetic algorithm performed the best relative to the ILP.
This suggests that the genetic algorithm could have utility in a very large problem instance, even if the target flow amount is quite small.
This is likely a very realistic scenario, where a small solution is sought amongst a very large space of options, and provides a compelling argument for the utility of the genetic algorithm.

\begin{figure}
  \centering
  {\includegraphics[width=.99\textwidth]{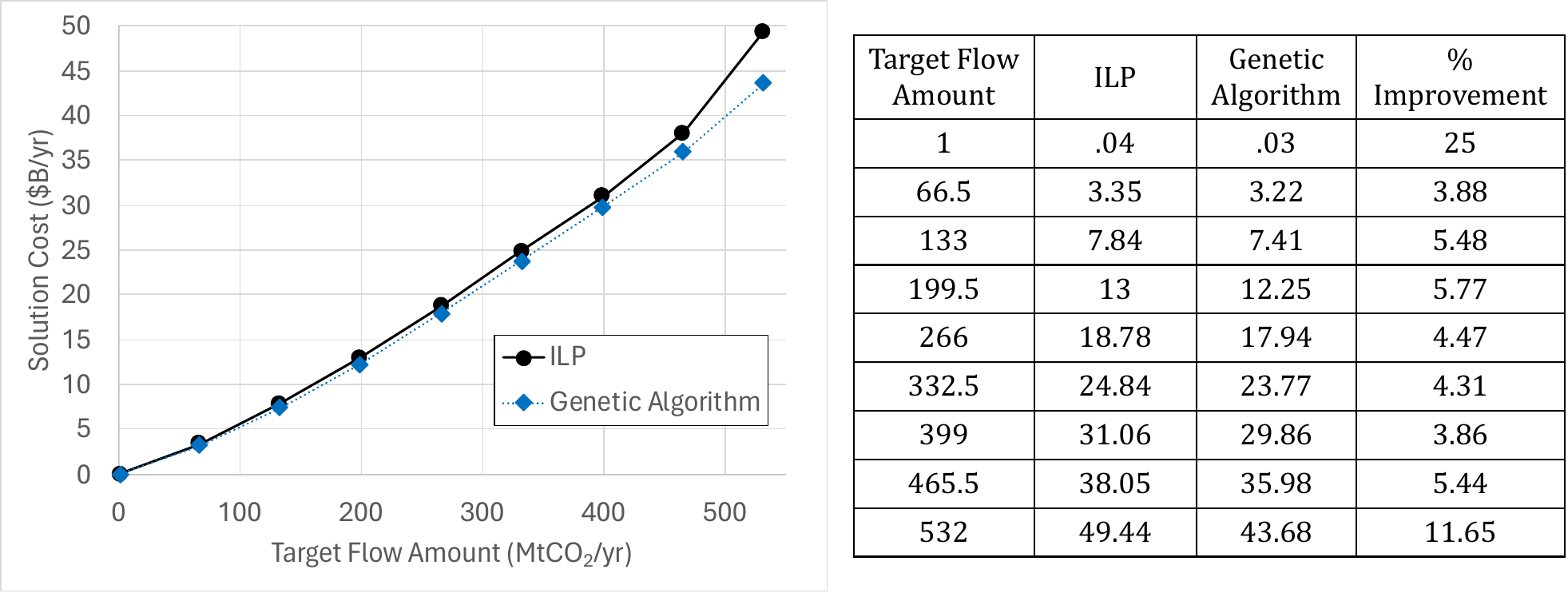}}
  \caption{Solution cost versus target flow amount for the genetic algorithm and optimal ILP.}
  \label{fig:UStarget}
\end{figure}

\section{Conclusion}\label{Sec:Conc}
In this paper, we addressed the MC-FCNF problem by formulating it as an ILP and proposing a novel genetic algorithm to find high-quality solutions efficiently. 
The key novel component of our approach is the use of fixed-cost scaling parameters as a proxy for direct flow values, allowing the genetic algorithm to search the solution space effectively without the need for computationally expensive repair functions.

Our genetic algorithm demonstrated significant efficiency and effectiveness in solving the MC-FCNF problem. 
By integrating the algorithm into the $SimCCS$ infrastructure optimization software, we were able to evaluate its performance on real-world CCS infrastructure design data. 
The results showed that the genetic algorithm consistently outperformed CPLEX solving an ILP on very large problem instances, and matched CPLEX's performance on moderately sized problem instances. 
The evaluation demonstrated the potential of the genetic algorithm in handling large and complex networks with varied target flow objectives, across a wide range of running time requirements. 

The genetic algorithm presented in this paper offers a robust and scalable solution to the MC-FCNF problem, providing an efficient alternative to traditional ILP solvers in realistic scenarios. 
Future work could involve tailoring the genetic algorithm to more specialized versions of the FCNF problem, including phased network deployments~\cite{jones2022designing}.
The fixed-cost scaling parameter technique may also prove useful in building evolutionary algorithms for other problems that can be modeled as network flow problems (e.g., facility location problems).
Future work could include implementing this genetic algorithm approach for use in facility location applications and testing it against benchmark datasets.
Future work could also include pursuing real-time applications where low computational running time is more critical than infrastructure design problems.
Finally, further exploring the performance of the genetic algorithm on very large instances with small solution sizes could reveal useful applications of the genetic algorithm to real-world problems.

\section*{Acknowledgments}
Source and storage data for the continental United Stated dataset was generously provided by Carbon Solutions, LLC~\cite{cs}.

\bibliographystyle{unsrt}  
\bibliography{references}

\end{document}